%% file: nips-modelsteal.tex
\documentclass{article}

\usepackage[preprint]{neurips_2026}

\usepackage[utf8]{inputenc} 
\usepackage[T1]{fontenc}    
\usepackage{hyperref}       
\usepackage{url}            
\usepackage{booktabs}       
\usepackage{amsfonts}       
\usepackage{nicefrac}       
\usepackage{microtype}      
\usepackage{xcolor}         

\usepackage{amsmath}
\usepackage{amssymb}
\usepackage{amsthm}
\usepackage{graphicx}
\usepackage{subcaption}
\usepackage{algorithm}
\usepackage{algorithmic}
\usepackage{mathtools}
\usepackage{cleveref}

\newtheorem{theorem}{Theorem}
\newtheorem{proposition}[theorem]{Proposition}

\newtheorem{corollary}[theorem]{Corollary}
\newtheorem{definition}[theorem]{Definition}
\newtheorem{remark}[theorem]{Remark}

\input{math_commands}

\title{Identified-Set Geometry of Distributional Model Extraction\\under Top-$K$ Censored API Access}

\author{%
\textbf{Wenhua Nie \quad ZiCheng Zhu \quad Jianan Wu}\\
\textbf{BINHAN LUO \quad Haoran Zheng \quad Jyh-Shing Roger Jang}\\[3pt]
\normalfont Correspondence: Wenhua Nie, National Taiwan University\\
\normalfont \texttt{d13944014@ntu.edu.tw}
}

\begin{document}

\maketitle

\input{sections/0_abstract}
\input{sections/1_introduction}
\input{sections/2_related_work}
\input{sections/3_identified_set}
\input{sections/4_minimax}
\input{sections/5_experiments}
\input{sections/6_conclusion}

\clearpage
\bibliographystyle{plainnat}
\bibliography{references}

\appendix
\input{sections/A_appendix}

\end{document}

%% file: math_commands.tex
\newcommand{\R}{\mathbb{R}}

\newcommand{\cS}{\mathcal{S}}

\newcommand{\bz}{\mathbf{z}}
\newcommand{\bp}{\mathbf{p}}
\newcommand{\bq}{\mathbf{q}}
\newcommand{\bx}{\mathbf{x}}

\DeclareMathOperator{\TV}{TV}
\DeclareMathOperator{\KL}{KL}
\DeclareMathOperator{\diam}{diam}

\DeclareMathOperator{\softmax}{softmax}

\newcommand{\Uk}{U_K}
\newcommand{\Ur}{U_R}
\newcommand{\ZA}{Z_A}

%% file: sections/0_abstract.tex
\begin{abstract}
Modern LLM APIs often reveal only top-$K$ logit scores and censor the remaining vocabulary.
We study the per-position distribution-recovery limits of this access model.
For censoring threshold~$\tau$, the compatible teacher distributions form an identified set whose total-variation diameter is exactly
$\Uk=(V{-}K)\exp(\tau)/(Z_A+(V{-}K)\exp(\tau)),$
where $Z_A$ is the observed partition function.
For KL recovery, we give a computable binary-endpoint lower bound and an asymptotically matching small-ambiguity upper bound, with an extension to reference-aware attackers.
Experiments on a Qwen3 math-reasoning teacher reveal a layered extraction hierarchy: on-task top-$K$ distillation recovers 12\% of private capability, full-logit distillation recovers 56\% despite 99\% KL closure, and generation-based extraction recovers 96\%.
Top-$K$ censoring therefore limits per-position distribution recovery but does not by itself prevent capability extraction, separating fidelity from transfer in prompt-only logit distillation.
\end{abstract}

%% file: sections/1_introduction.tex
\section{Introduction}
\label{sec:intro}

Large language models deployed behind APIs represent significant investments in data curation, training compute, and algorithmic innovation. The security assumption underlying this deployment model is that restricting access to top-$K$ scores or log-probabilities---rather than full vocabulary distributions---provides meaningful protection against model extraction. But how much protection does top-$K$ censoring actually provide?

Recent work has begun to quantify the extraction threat. \citet{carlini2024stealing} demonstrated algebraic extraction of the output projection layer from production APIs. \citet{golowich2025provably} proved that model extraction is polynomial-time efficient when the learner has access to \emph{full} conditional probability vectors and the model has low logit rank. However, these results assume access that most real APIs do not provide: production endpoints typically return only the top-$K$ log-probabilities (e.g., $K{=}5$ or $K{=}20$), censoring the remaining $V{-}K$ entries entirely.

\paragraph{This paper.} To our knowledge, we give the first exact identified-set characterization for the per-position distribution-recovery component of top-$K$ censored model extraction, together with a KL lower bound and small-ambiguity asymptotics. Our framework treats the censored observation as a \emph{partial identification} problem~\citep{manski2003partial}: for each prompt, the top-$K$ scores define an \emph{identified set} of compatible teacher distributions, and the attacker's task is to recover the true distribution from within this set. The main theorem analyzes the harder relative-score case where the normalizer is unknown; when an API returns calibrated normalized log-probabilities, the ambiguity shrinks to the observed hidden tail mass (\Cref{cor:normalized}). Capability extraction is evaluated separately because downstream skill transfer need not be monotone in per-position KL.

\paragraph{Contributions.}
\begin{enumerate}
    \item \textbf{Exact identified-set geometry} (\S\ref{sec:identified-set}). Under the \emph{unnormalized-logit} access model (the harder case, where the normalizing constant is unknown), we prove that the closure of teacher distributions compatible with a top-$K$ observation has total-variation diameter exactly $\Uk = (V{-}K)\exp(\tau)/(Z_A + (V{-}K)\exp(\tau))$, where $\tau$ is the $K$-th logit and $Z_A$ is the observed partition function. This quantity is computable from a single API call. We also characterize the strictly easier normalized-log-probability case as a corollary (\Cref{cor:normalized}).

    \item \textbf{KL recovery limits} (\S\ref{sec:minimax}). For the unnormalized access model, we prove a computable binary-endpoint lower bound $R_{\mathrm{bin}}(\Uk) \leq R_K^*$ and an asymptotically matching upper bound $\Uk/e + O(\Uk^2)$ in the small-ambiguity regime. For the large $\Uk$ values observed in realistic top-$K$ APIs, $R_{\mathrm{bin}}$ is a conservative certified lower bound; interior concentrated-tail adversaries can make the full finite-$\Uk$ minimax risk larger.

    \item \textbf{Layered extraction hierarchy} (\S\ref{sec:experiments}). We decompose capability extraction into distribution and access-mode effects. On math reasoning: on-task top-$K$ KD recovers 12\% PVR, on-task full-logit KD recovers 56\%, and generation access recovers 96\%---an 84-point span across access modes on the same prompt distribution. Off-task methods expose the fidelity-transfer paradox: sparse top-$K$ KD reaches 28\% PVR despite negative KL closure. On code generation, prompt-only logit methods recover 44--46\% PVR while generation reaches teacher-level execution.

    \item \textbf{$K$-sweep and cross-family stress tests} (\S\ref{sec:experiments}). We sweep $K \in \{1, 5, 10, 20, 50, 100\}$ on Qwen3-0.6B and show extraction error tracks the theoretical $\Uk$ prediction. The $\Uk$ calculation is architecture-agnostic: on Llama-3.2-1B ($V{=}128{,}256$), $\Uk = 0.942$ at $K{=}20$. We also repeat WikiText distribution-recovery extraction on Llama-3.2-1B/3B teachers, where top-$K$ \texttt{delta\_rank} closes 54\%/40\% of CE-to-teacher KL.
\end{enumerate}

The central message is not that top-$K$ censoring is a complete extraction defense. Rather, it precisely characterizes one API surface: prompt-only, per-position logit access. Generation and autoregressive logit access are deliberately treated as stronger oracles; their high PVR shows that logit censoring complements but cannot replace endpoint controls. This also separates our contribution from generation-distillation work: we isolate the information left by censored logits, then measure capability transfer as a distinct outcome.

%% file: sections/2_related_work.tex
\section{Related Work}
\label{sec:related}

\paragraph{Model extraction attacks.}
Model stealing has progressed from equation-solving attacks on linear models~\citep{tramer2016stealing} to functionally equivalent extraction of deep networks~\citep{jagielski2020high}. For LLMs, \citet{carlini2024stealing} achieved a qualitative leap by algebraically recovering the output projection matrix from production API queries, exploiting the affine structure of the last-layer mapping. \citet{finlayson2024logits} showed that even limited API access leaks architectural information such as hidden dimension. However, these attacks are \emph{operational}---they demonstrate what specific algorithms can extract---without characterizing the fundamental information-theoretic limits of the access model. Our work fills this gap: we prove what \emph{no} algorithm can recover from a single top-$K$ censored observation per prompt position, regardless of computational budget.

\paragraph{Theoretical model extraction.}
\citet{golowich2025sequences} showed that the log-probability matrix of autoregressive LMs has approximate low rank, and \citet{golowich2025provably} proved that a polynomial-time algorithm can learn any such model from conditional probability queries. These results establish that model extraction is efficient \emph{when the learner sees full logit vectors}. Our contribution is orthogonal: we ask what happens when the logit vector is censored to its top-$K$ entries. The identified set framework shows that top-$K$ censoring creates an irreducible \emph{per-position} ambiguity floor: for each prompt-position pair, the top-$K$ observation leaves TV diameter $\Uk$ fundamentally unresolved.

\paragraph{Knowledge distillation under access constraints.}
Knowledge distillation trains a student by matching teacher predictions~\citep{hinton2015distilling}; recent LLM distillation work has increasingly addressed restricted access settings. \citet{agarwal2024gkd} (GKD) identified the train-inference distribution mismatch of standard logit-matching KD and proposed on-policy generation as a fix. \citet{gu2024minillm} showed that forward KL causes overestimation in generative LLMs and switched to reverse KL with on-policy samples. DeepSeek-R1~\citep{deepseek2025r1} demonstrated that reasoning capability transfers via SFT on model-generated chain-of-thought traces, which does not require logit access. These works establish empirically that generation-based distillation outperforms logit-based distillation for capability transfer, but they do not quantify the access-mode decomposition from top-$K$ logits to full logits to generation, nor the non-monotonic relationship between KL closure and private-value recovery. Our contribution is not a new distillation method; rather, we characterize a complementary distributional limit: the identified set under top-$K$ censoring has TV diameter $\Uk$ for each prompt-position distribution, while downstream capability transfer must be measured separately.

\paragraph{Partial identification.}
Partial identification is a well-established framework in econometrics and statistics for inference under incomplete data~\citep{manski2003partial,molinari2020microeconometrics}. When the data do not uniquely identify a parameter, the goal shifts from point estimation to characterizing the \emph{identified set}---the family of parameter values consistent with observations. Despite its generality, partial identification has not previously been applied to LLM APIs. We adapt this framework to the top-$K$ censored logit setting, deriving the exact geometry of the identified set and computable KL recovery lower bounds. The closest mathematical precedent is censored regression (Tobit models), but our setting differs fundamentally: we observe an order-statistic truncation of a softmax distribution, not a location-censored linear model.

\paragraph{Top-$K$ in inference and training.}
The top-$K$ truncation has been studied for inference quality~\citep{hewitt2022truncation} and efficient attention~\citep{topk_attention_tv2025}. \citet{hewitt2022truncation} views top-$K$ sampling as recovering an unsmoothed support; \citet{topk_attention_tv2025} derives the TV distance between full and truncated attention distributions. Our work differs in the object of study: these papers analyze approximation error $\TV(P, \hat{P})$ for a \emph{fixed} truncation, while we characterize the \emph{diameter} $\diam_{\TV}(\bar{\cS}_K)$ of the \emph{identified set}---the worst-case ambiguity over \emph{all} distributions compatible with the observation. These are fundamentally different quantities answering different questions.

%% file: sections/3_identified_set.tex
\section{The Identified Set under Top-$K$ Censoring}
\label{sec:identified-set}

\subsection{Top-$K$ Observation Model}

Consider a teacher LLM with vocabulary $[V] = \{1, \ldots, V\}$. For a given prompt $\bx$, the teacher produces a next-token logit vector $\bz \in \R^V$ and a distribution $\bp = \softmax(\bz)$.

\paragraph{Access model.} We consider a top-$K$ API that returns \emph{unnormalized logits} (equivalently, log-probabilities up to an unknown additive shift) for the $K$ highest-scoring tokens:
\begin{enumerate}
    \item The index set $A \subset [V]$ of the $K$ highest-probability tokens ($|A| = K$);
    \item The raw logit values $\{z_v\}_{v \in A}$ (not normalized log-probabilities);
    \item Implicitly, the censoring threshold $\tau = z_{(K)}$ (the $K$-th largest logit).
\end{enumerate}
All remaining logits are hidden, subject only to $z_u \leq \tau$. This models APIs that reveal top-$K$ relative scores while hiding the absolute normalizer $\log \sum_v \exp(z_v)$.

\paragraph{Normalized log-probability APIs.} If the API instead returns \emph{normalized} log-probabilities $\log p_v$ for $v \in A$, then the observed head mass $\sum_{v \in A} p_v$ is known exactly, and the hidden tail mass is $1 - \sum_{v \in A} p_v$. In this case, the TV diameter of the identified set reduces to precisely this hidden tail mass. Our $\Uk$ formula remains valid as an upper bound (since the hidden tail mass $\leq \Uk$), and equals it when the normalizer is unknown. We present the unnormalized-logit setting as the primary formulation because it is the harder (more ambiguous) case and subsumes the normalized case.

\begin{definition}[Identified set]
\label{def:identified-set}
The \emph{identified set} under a top-$K$ observation $(A, \{z_v\}_{v \in A}, \tau)$ is:
\begin{equation}
\label{eq:identified-set}
\cS_K = \bigl\{ \bp = \softmax(\bz') : z'_v = z_v \;\forall v \in A,\; z'_u \leq \tau \;\forall u \notin A \bigr\}.
\end{equation}
Its closure $\bar{\cS}_K$ admits tail logits $z'_u \to -\infty$ (zero probability on censored tokens).
\end{definition}

\subsection{Probability-Space Representation}

Let $M = V - K$, $b = \exp(\tau)$, and $\ZA = \sum_{v \in A} \exp(z_v)$. Define the conditional head distribution $\alpha_v = \exp(z_v) / \ZA$ for $v \in A$. Every distribution in $\bar{\cS}_K$ has the form:
\begin{equation}
\label{eq:feasible}
p_v = (1-t)\alpha_v \;\text{ for } v \in A, \qquad \sum_{u \notin A} p_u = t, \qquad 0 \leq t \leq \Uk,
\end{equation}
where $t$ is the total tail mass. Each censored token satisfies the \emph{per-token cap}:
\begin{equation}
\label{eq:cap}
p_u \leq \frac{(1-t)\,\Uk}{M(1-\Uk)}, \qquad u \notin A.
\end{equation}
This cap arises from the logit ceiling $z_u \leq \tau$: the unnormalized weight $\exp(z_u) \leq b$ constrains each censored token's probability relative to the observed partition function.

\subsection{Exact TV Diameter}

\begin{theorem}[Identified set TV diameter]
\label{thm:tv-diameter}
\begin{equation}
\label{eq:uk}
\diam_{\TV}(\bar{\cS}_K) = \Uk = \frac{M \cdot \exp(\tau)}{\ZA + M \cdot \exp(\tau)}.
\end{equation}
\end{theorem}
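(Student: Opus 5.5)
Substituting the explicit parametrization \eqref{eq:feasible} into the total-variation distance turns the problem into a one-dimensional optimization over tail masses, which I then bound above and show is attained. Write $w_u=\exp(z'_u)\in[0,b]$ for $u\notin A$ (with $w_u=0$ allowed in the closure) and $W=\sum_{u\notin A}w_u\in[0,Mb]$. Then $p_v=\exp(z_v)/(\ZA+W)$ for $v\in A$, and $p_u=w_u/(\ZA+W)$. The tail mass is $t=W/(\ZA+W)$, which is increasing in $W$. Hence $t$ ranges over $[0,\Uk]$ with $\Uk=Mb/(\ZA+Mb)$. This confirms \eqref{eq:feasible}, and the cap \eqref{eq:cap} follows from $p_u=w_u(1-t)/\ZA\le b(1-t)/\ZA$ together with $b/\ZA=\Uk/(M(1-\Uk))$.

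\textbf{Upper bound.} Take $\bp,\bq\in\bar{\cS}_K$ with tail masses $s\le t$. On the head, both are proportional to $\alpha$, so the head contribution to the $\ell_1$ distance is $\sum_{v\in A}|(1-s)-(1-t)|\alpha_v=t-s$. On the tail, the triangle inequality gives $\sum_{u\notin A}|p_u-q_u|\le s+t$. Therefore $\TV(\bp,\bq)=\tfrac12\|\bp-\bq\|_1\le\tfrac12(t-s+s+t)=t\le\Uk$.

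\textbf{Attainment.} Let $\bp$ be the closure point with all tail logits sent to $-\infty$, so $s=0$ and $p_v=\alpha_v$. Let $\bq$ have all tail logits equal to $\tau$, so $t=\Uk$. The head part of the distance then contributes $\Uk$, and the tail contributes $\Uk$, since $\bp$ puts zero mass on the tail. This gives $\TV(\bp,\bq)=\tfrac12(\Uk+\Uk)=\Uk$, so the diameter is exactly $\Uk$.

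The only delicate point is the role of the closure and the tie at $\tau$. The maximizing $\bp$ lies in $\bar{\cS}_K\setminus\cS_K$, so the supremum over $\cS_K$ itself is approached but not attained; that is why the statement is about $\bar{\cS}_K$. I would also note that the upper-bound argument never uses the per-token cap. The cap matters only for pinning the maximal tail mass at $\Uk$, which is exactly where the logit ceiling $z'_u\le\tau$ enters.
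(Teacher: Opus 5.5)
Your proof is correct and follows the paper's own argument essentially step for step. Like the paper, you parametrize $\bar{\cS}_K$ by censored weights in $[0,b]$, bound the head contribution by $|t-s|$ and the tail by $t+s$ to get $\TV\le\max(t,s)\le\Uk$, and attain the bound with the zero-tail and maximal uniform-tail distributions. Your two extra remarks also check out: the derivation of the cap \eqref{eq:cap}, and the observation that the supremum over $\cS_K$ itself is not attained.
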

\begin{proof}
For any two feasible distributions with tail masses $t,s$, the head conditional distribution is fixed at $\alpha$, giving
$\TV(\bp,\bq)\leq \tfrac12(|t-s|+t+s)=\max(t,s)\leq\Uk$.
Equality is achieved by the zero-tail distribution and the maximal uniform-tail distribution. Full details are in \Cref{app:proofs}.
\end{proof}

\begin{remark}[$\Uk$ is computable from a single API call]
\label{rem:computable}
The attacker can compute $\Uk$ directly from the top-$K$ observation: $\ZA$ requires only the $K$ revealed logits, $\tau$ is the smallest revealed logit, and $M = V - K$ requires knowing the vocabulary size (a public architectural detail). Thus the attacker can quantify their own distributional uncertainty without any additional queries.
\end{remark}

\begin{remark}[Scaling behavior]
As $K$ increases, $\Uk$ decreases monotonically: more revealed logits shrink the identified set. At $K = V$ (full logit access), $\Uk = 0$ and the distribution is exactly identified. At $K = 1$, $\Uk$ is close to 1 for typical LLM distributions, leaving nearly all probability mass ambiguous.
\end{remark}

\subsection{Normalized Log-Probability APIs}
\label{sec:normalized}

Many production APIs return \emph{normalized} log-probabilities $\ell_v = \log p_v$ for $v \in A$. Then the observed head mass $P_A$ is known and the hidden tail mass is $t^* = 1-P_A$.

\begin{corollary}[TV diameter under normalized access]
\label{cor:normalized}
If the API reveals normalized log-probabilities for the top-$K$ tokens, the hidden tail mass $t^* = 1 - \sum_{v \in A} p_v$ is known exactly. Each censored token satisfies $p_u \leq c = p_{(K)}$. Let $M = V - K$. Then:
\begin{equation}
\label{eq:uk-norm}
\diam_{\TV}(\bar{\cS}_K^{\,\mathrm{norm}}) \leq t^* \leq \Uk.
\end{equation}
Equality $\diam_{\TV} = t^*$ holds whenever $M \geq 2\lceil t^*/c \rceil$ (two allocations with disjoint supports exist). When $M = 1$, the diameter is $0$. For typical LLM vocabularies ($M > 10^5$, $c \ll t^*$), the cap is non-binding and $\diam_{\TV} = t^*$.
\end{corollary}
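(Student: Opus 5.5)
The plan is to reduce everything to the censored coordinates, since under normalized access the head block is pinned down exactly. Every $\bp\in\bar{\cS}_K^{\,\mathrm{norm}}$ has $p_v$ equal to the revealed value for $v\in A$. Its tail vector $(p_u)_{u\notin A}$ lies in the capped simplex
\[
\Delta_c(t^*)=\Bigl\{\mathbf{r}\in\R_{\ge0}^{M}:\textstyle\sum_u r_u=t^*,\ r_u\le c\Bigr\},
\]
where the cap $p_u\le c=p_{(K)}$ holds because a censored token cannot outrank the $K$-th revealed token. Taking the closure only makes the cap non-strict, which is harmless. For $\bp,\bq$ in the set, the head terms cancel, so
\[
\TV(\bp,\bq)=\tfrac12\sum_{u\notin A}|p_u-q_u|\le\tfrac12\Bigl(\sum_u p_u+\sum_u q_u\Bigr)=t^*.
\]
This gives the first inequality in \eqref{eq:uk-norm}.

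For $t^*\le\Uk$, I would compare with the unnormalized model, using the same argument as the representation \eqref{eq:feasible}. Write $T=\sum_{u\notin A}\exp(z_u)$. Then $t^*=T/(\ZA+T)$, and the map $T\mapsto T/(\ZA+T)$ is increasing. Since $z_u\le\tau$ for every $u\notin A$, we have $T\le M\exp(\tau)$, so $t^*\le M\exp(\tau)/(\ZA+M\exp(\tau))=\Uk$ by \Cref{thm:tv-diameter}. This is the precise sense in which the normalized case is easier.

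For equality, the bound $\tfrac12\sum_u|p_u-q_u|\le t^*$ is tight exactly when the two tail vectors have disjoint supports. So it suffices to build two such allocations. Let $n=\lceil t^*/c\rceil$. Fill $n-1$ tokens with mass $c$ each and put the remainder $t^*-(n-1)c\in(0,c]$ on one more token; this is feasible and uses $n$ tokens. If $M\ge 2n$, do this on two disjoint sets of $n$ censored tokens. That gives two feasible points with disjoint support, hence TV distance $t^*$ (the degenerate case $t^*=0$ is trivial). I need $t^*\le Mc$ for any allocation to exist, but this holds automatically because the true distribution is feasible.

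When $M=1$, the single censored token must carry mass exactly $t^*$, so the set is a singleton and has diameter $0$. For typical vocabularies, $c\ll t^*$ is not an obstacle: $M>10^5$ comfortably exceeds $2\lceil t^*/c\rceil$ whenever $c$ is not astronomically smaller than $t^*$. For example, $t^*\le1$ and $c\ge 10^{-5}\cdot 2$ suffice, and then the sufficient condition applies.

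The main obstacle is mild: checking that the cap in the normalized model is exactly $p_u\le p_{(K)}$ (including the tie convention), and making sure the greedy construction respects it. Both follow from the order-statistic definition of $A$.
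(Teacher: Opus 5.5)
Your proof is correct and follows the same route the paper indicates: the revealed head cancels, so the TV distance is half the $\ell_1$ distance between two capped tail allocations of total mass $t^*$; equality comes from two disjoint-support allocations, which is the paper's parenthetical justification made explicit by your greedy filling of $\lceil t^*/c\rceil$ tokens; and $t^*\le\Uk$ follows from the logit ceiling $z_u\le\tau$, exactly as in the representation \eqref{eq:feasible}. The paper gives no separate appendix proof of this corollary, so your write-up is a fleshed-out version of its sketch, including the $M=1$ singleton case and a concrete sufficient condition for the informal ``typical vocabulary'' claim.
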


%% file: sections/4_minimax.tex
\section{Minimax Recovery and Reference-Aware Extension}
\label{sec:minimax}

\subsection{Minimax KL Recovery}

Given a top-$K$ observation, the attacker seeks an estimator $\bq$ that minimizes the worst-case KL divergence over the identified set:
\begin{equation}
\label{eq:minimax}
R_K^* = \inf_{\bq} \sup_{\bp \in \bar{\cS}_K} \KL(\bp \| \bq).
\end{equation}

\begin{theorem}[Asymptotic minimax KL bounds]
\label{thm:minimax}
As $\Uk \to 0$,
\begin{equation}
\label{eq:minimax-rate}
R_{\mathrm{bin}}(\Uk) \;\leq\; R_K^* \;\leq\; \frac{\Uk}{e} + O(\Uk^2),
\end{equation}
where $R_{\mathrm{bin}}(\Uk) = \Uk/e + O(\Uk^2)$ is the binary endpoint minimax lower bound (\Cref{rem:exact}). Thus the first-order minimax risk is $\Uk/e$ in the small-ambiguity regime. For the large $\Uk$ values observed in practice, $R_{\mathrm{bin}}$ is a conservative computable lower bound rather than an exact finite-$\Uk$ minimax value.
\end{theorem}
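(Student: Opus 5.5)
The plan is to prove the two inequalities in \eqref{eq:minimax-rate} separately. The lower bound comes from a two-point (binary endpoint) reduction combined with data processing. The upper bound comes from evaluating one explicit estimator and using the per-token cap \eqref{eq:cap}. Throughout, write $U=\Uk$, $M=V-K$, and use the probability-space representation \eqref{eq:feasible}.

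\textbf{Lower bound.} Restrict the adversary to the two extreme points used in the proof of \Cref{thm:tv-diameter}:
\begin{itemize}
\item $\bp^{(0)}$: head $\alpha$ and zero tail;
\item $\bp^{(1)}$: head $(1-U)\alpha$ and uniform tail $U/M$ on each censored token.
\end{itemize}
Both lie in $\bar{\cS}_K$. Fix any $\bq$ and let $s=\sum_{u\notin A}q_u$. Apply data processing for KL to the coarsening map ``head vs.\ tail''. This gives
\[
\KL(\bp^{(i)}\|\bq)\geq \KL(\mathrm{Bern}(t_i)\|\mathrm{Bern}(s)), \qquad t_0=0,\ t_1=U.
\]
Hence
\[
R_K^*\geq R_{\mathrm{bin}}(U):=\min_{s\in[0,1]}\max\bigl\{-\log(1-s),\ \KL(\mathrm{Bern}(U)\|\mathrm{Bern}(s))\bigr\}.
\]
This quantity is computable by one-dimensional root finding. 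The first branch is increasing in $s$ and the second is decreasing on $[0,U]$, so the minimum is attained where the two branches are equal.

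For the asymptotics, set $s=Ux$ and expand both sides to first order in $U$:
\[
s \approx U\log(U/s) - U + s .
\]
This forces $\log(1/x)=1$, i.e.\ $x=1/e+O(U)$. Therefore $R_{\mathrm{bin}}(U)=U/e+O(U^2)$. I would justify this step with the implicit function theorem applied to the equalization equation in the variables $(U,x)$.

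\textbf{Upper bound.} Take the explicit estimator $\bq$ with head $(1-s)\alpha$ and uniform tail $s/M$, where $s=U/e$. For any $\bp\in\bar{\cS}_K$ with tail mass $t\le U$, the head contribution is exact, so
\[
\KL(\bp\|\bq)=(1-t)\log\tfrac{1-t}{1-s}+\sum_{u\notin A}p_u\log\tfrac{Mp_u}{s}.
\]
The cap \eqref{eq:cap} gives
\[
\frac{Mp_u}{s}\le \frac{e(1-t)}{1-U},
\]
so the tail sum is at most $t\bigl(1+\log\tfrac{1-t}{1-U}\bigr)$. This step is what handles concentrated-tail adversaries: they cannot beat the cap.

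It then remains to maximize the scalar function
\[
g(t)=(1-t)\log\tfrac{1-t}{1-s}+t+t\log\tfrac{1-t}{1-U}
\]
over $t\in[0,U]$. Expanding gives $g(t)= -t+s+t+O(U^2)=s+O(U^2)$, uniformly in $t$, because every logarithm has argument $1+O(U)$. Hence $R_K^*\le U/e+O(U^2)$.

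\textbf{Main obstacle.} I expect the hard part to be making the $O(U^2)$ remainders uniform, both over $t\in[0,U]$ in $g$ and in the implicit equation defining $R_{\mathrm{bin}}$. I would handle this with explicit bounds of the form $|\log(1+y)-y|\le y^2$ for $|y|\le 1/2$. Combining the two bounds shows that the first-order term $U/e$ matches, which is the claim of \Cref{thm:minimax}. For large $U$ only the lower bound $R_{\mathrm{bin}}$ is asserted, consistent with the statement.
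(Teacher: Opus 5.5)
Your proposal is correct and matches the paper's proof step for step: the lower bound restricts to the same two endpoints (zero tail and maximal uniform tail), and the upper bound uses the same symmetric estimator with reserve $s=\Uk/e$, whose cap-based bound reduces to exactly the paper's envelope $G_{\Uk}(t)$, your $g(t)$. The differences are cosmetic. Your data-processing step is slightly more careful than the paper's displayed equalities for $\KL(\bp^{(0)}\|\bq)$ and $\KL(\bp^{(\Uk)}\|\bq)$, which are exact only when $\bq$ has head conditional $\alpha$ (and, for the second endpoint, a uniform tail). The paper obtains the expansion from the closed form $s^*/(1-s^*)=\Uk(1-\Uk)^{(1-\Uk)/\Uk}$ rather than from the implicit function theorem.
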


\begin{proof}[Proof sketch (full proof in \Cref{app:proofs})]
\textbf{Lower bound.} Restrict to the two extremes $\bp^{(0)}$ (zero tail) and $\bp^{(\Uk)}$ (maximal uniform tail). For any $\bq$ with tail mass $s$:
\begin{align}
\KL(\bp^{(0)} \| \bq) &= -\log(1-s), \label{eq:kl-min} \\
\KL(\bp^{(\Uk)} \| \bq) &= (1{-}\Uk)\log\frac{1{-}\Uk}{1{-}s} + \Uk\log\frac{\Uk}{s}. \label{eq:kl-max}
\end{align}
Balancing \eqref{eq:kl-min} and \eqref{eq:kl-max} gives $s^* = \Uk/e + O(\Uk^2)$ and $R_K^* \geq R_{\mathrm{bin}}(\Uk)=\Uk/e + O(\Uk^2)$.

\textbf{Upper bound.} Use the symmetric estimator with head mass $1-s^*$ and uniform tail mass $s^*/M$. The cap in \eqref{eq:cap} bounds the conditional tail KL, so the worst-case risk is at most $\Uk/e+O(\Uk^2)$ uniformly over $t\in[0,\Uk]$.
\end{proof}

\begin{remark}[Finite-$\Uk$ lower bound]
\label{rem:exact}
The binary endpoint reserve is $s^* = A_{\Uk}/(1+A_{\Uk})$ with $A_{\Uk} = \Uk(1{-}\Uk)^{(1-\Uk)/\Uk}$ and risk $R_{\mathrm{bin}}(\Uk) = -\log(1-s^*)$. Since observed $\Uk$ values are large, experiments report this computable lower bound rather than the asymptotic $\Uk/e$ approximation. The full finite-$\Uk$ minimax risk can be larger because an interior tail-mass adversary may concentrate probability up to the per-token cap; therefore $R_{\mathrm{bin}}$ should be read as a certified impossibility lower bound, not as an exact finite-$\Uk$ risk. \Cref{tab:gap} reports the finite-$\Uk$ gap for the proof's upper envelope.
\end{remark}

We emphasize that at the practical operating points measured in this paper ($\Uk>0.8$), the exact finite-$\Uk$ KL minimax rate remains open; the tight finite-regime statement is the TV diameter in \Cref{thm:tv-diameter}, while the KL result supplies a conservative certified lower bound and a first-order small-$\Uk$ characterization.

\begin{corollary}[Critical $K$]
\label{cor:kstar}
For a target KL recovery tolerance $\delta > 0$, the binary lower bound gives a necessary condition: if $R_{\mathrm{bin}}(\Uk) > \delta$, then no estimator can guarantee KL risk at most $\delta$ over the identified set. In the small-$\Uk$ regime this yields the first-order heuristic $\Uk \lesssim e\delta$.
\end{corollary}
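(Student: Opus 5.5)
The statement is Corollary~\ref{cor:kstar}. It follows from the lower-bound half of \Cref{thm:minimax}, taken in its finite-$\Uk$ form from \Cref{rem:exact}, followed by a first-order expansion. Nothing new has to be constructed.

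\textbf{Step 1 (finite-$\Uk$ necessity).} Fix the observation and assume $R_{\mathrm{bin}}(\Uk) > \delta$. Take any estimator $\bq$ and let $s$ be its tail mass. Both endpoints $\bp^{(0)}$ and $\bp^{(\Uk)}$ lie in $\bar{\cS}_K$ by \eqref{eq:feasible}, with $t=0$ and $t=\Uk$ and uniform tail $\Uk/M$, which respects the cap \eqref{eq:cap}. Therefore
\[
\sup_{\bp\in\bar{\cS}_K}\KL(\bp\|\bq)\ \ge\ \max\bigl\{\KL(\bp^{(0)}\|\bq),\,\KL(\bp^{(\Uk)}\|\bq)\bigr\}.
\]
I will show that the right-hand side is at least $R_{\mathrm{bin}}(\Uk)$ for every $\bq$, not only for head-proportional ones.

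\begin{itemize}
\item For $\bp^{(0)}$, the log-sum inequality on the head block gives $\KL(\bp^{(0)}\|\bq)\ge -\log(1-s)$, with equality when $\bq$'s head is proportional to $\alpha$.
\item For $\bp^{(\Uk)}$, apply the log-sum inequality separately to the head block and the tail block. This shows the KL is at least the binary divergence between $(1-\Uk,\Uk)$ and $(1-s,s)$, which is exactly \eqref{eq:kl-max}.
\end{itemize}

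So the maximum is at least $\max\{f_0(s),f_1(s)\}$. Here $f_0(s)=-\log(1-s)$ is increasing in $s$. The function $f_1$ is the binary KL, which is decreasing in $s$ on $[0,\Uk]$ and increasing afterward. For $s\ge \Uk$ we have $f_0(s)\ge f_0(s^*)$, since $s^*\le \Uk$. For $s<\Uk$, $\max\{f_0,f_1\}$ is minimized at the crossing point $s^*$. This gives $R_K^*\ge R_{\mathrm{bin}}(\Uk)>\delta$, so every $\bq$ has worst-case risk above $\delta$.

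\textbf{Step 2 (the crossing).} Verify the closed form in \Cref{rem:exact} by setting $f_0(s)=f_1(s)$. The $\log(1-s)$ terms combine into $\log\frac{s}{1-s}=\log\Uk+\frac{1-\Uk}{\Uk}\log(1-\Uk)$. This yields $s^*/(1-s^*)=A_{\Uk}$. Only existence and uniqueness of the crossing are needed for the corollary, and both follow from the monotonicity noted in Step 1.

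\textbf{Step 3 (heuristic).} Expand $A_{\Uk}=\Uk\exp\!\bigl(\tfrac{1-\Uk}{\Uk}\log(1-\Uk)\bigr)=\Uk(e^{-1}+O(\Uk))$. This gives $R_{\mathrm{bin}}=\Uk/e+O(\Uk^2)$, in agreement with \Cref{thm:minimax}. Guaranteeing risk at most $\delta$ requires $R_{\mathrm{bin}}(\Uk)\le\delta$, which to first order reads $\Uk\lesssim e\delta$.

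\textbf{Main obstacle.} The only delicate point is Step 1's reduction of an arbitrary $\bq$ to a head-proportional one via the log-sum inequality. I expect it to go through cleanly once each KL is split into its head and tail blocks.
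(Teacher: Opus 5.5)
Your proposal is correct and follows the paper's route: the corollary is the contrapositive of the binary-endpoint bound $R_K^*\ge R_{\mathrm{bin}}(\Uk)$ from \Cref{thm:minimax} and \Cref{rem:exact}, obtained by restricting to $\bp^{(0)}$ and $\bp^{(\Uk)}$, balancing the two KLs to get $s^*/(1-s^*)=A_{\Uk}$, and expanding to $\Uk/e+O(\Uk^2)$. Your log-sum step and your monotonicity argument for $\max\{f_0,f_1\}$ also make the appendix argument rigorous, since the appendix states the two KL formulas as equalities for every $\bq$ with tail mass $s$ (this holds only for head-proportional, uniform-tail $\bq$; in general they are the lower bounds you derive) and it equates them without checking that the crossing minimizes the maximum.
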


\subsection{Reference-Aware Ambiguity}
\label{sec:reference}

When the attacker knows the teacher's base model, structural assumptions can tighten the identified set. We use a \emph{bounded perturbation} prior:

\begin{definition}[Bounded fine-tuning perturbation]
\label{def:bounded-ft}
The teacher is obtained by fine-tuning a known reference model with logits $z_{\mathrm{ref}}$. For each censored token $u \notin A$, the fine-tuning perturbation is bounded: $z_T(u) - z_{\mathrm{ref}}(u) \leq \rho$ for a known calibration margin $\rho > 0$. Combined with the top-$K$ constraint $z_T(u) \leq \tau$, this gives $z_T(u) \leq \min(\tau, z_{\mathrm{ref}}(u) + \rho)$.
\end{definition}

On observed tokens $\rho$ is calibratable, but compliance on censored tokens is untestable without full logits; it is therefore a structural prior about fine-tuning magnitude. Under this assumption:

\begin{theorem}[Reference-aware TV diameter]
\label{thm:reference}
Define $B_u = \exp\bigl(\min(\tau, z_{\mathrm{ref}}(u) + \rho)\bigr)$ and $C_R = \sum_{u \notin A} B_u$. Then:
\begin{equation}
\label{eq:ur}
\diam_{\TV}(\bar{\cS}_K^R) = \Ur = \frac{C_R}{\ZA + C_R} \leq \Uk.
\end{equation}
The analogous binary endpoint lower bound and small-$\Ur$ upper bound scale as $\Ur/e + O(\Ur^2)$ when the estimator reserves censored mass proportional to $B_u$.
\end{theorem}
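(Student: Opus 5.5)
The plan mirrors the proof of \Cref{thm:tv-diameter}, replacing the uniform cap $b=\exp(\tau)$ by the token-specific caps $B_u$. The reference-aware identified set $\bar{\cS}_K^R$ consists of $\softmax(\bz')$ with $z'_v=z_v$ for $v\in A$ and $z'_u\le \log B_u$ for $u\notin A$, closed to allow $z'_u\to-\infty$. Writing $w_u=\exp(z'_u)\in[0,B_u]$, every element has the form
\[
p_v=\frac{\exp(z_v)}{\ZA+W},\quad v\in A,\qquad p_u=\frac{w_u}{\ZA+W},\quad u\notin A,\qquad W=\sum_{u\notin A}w_u .
\]
So the head conditional is again fixed at $\alpha$, and the tail mass is $t=W/(\ZA+W)$. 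Since $W\mapsto W/(\ZA+W)$ is increasing and $W\le C_R$, we get $0\le t\le C_R/(\ZA+C_R)=\Ur$.

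\textbf{Upper bound.} For two feasible $\bp,\bq$ with tail masses $t,s$, the head contribution to the $\ell_1$ distance is $|t-s|\sum_{v\in A}\alpha_v=|t-s|$. The tail contribution is at most $t+s$ by the triangle inequality. Hence
\[
\TV(\bp,\bq)\le\tfrac12\bigl(|t-s|+t+s\bigr)=\max(t,s)\le\Ur .
\]

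\textbf{Attainment.} Take $\bp^{(0)}$ with all $w_u=0$, i.e.\ the head-only distribution $\alpha$ in the closure. Take $\bp^{(\Ur)}$ with $w_u=B_u$ for every $u$. These two have disjoint tail supports in the sense that $\bp^{(0)}$ has zero tail, so the bound is tight: $\tfrac12(\Ur+\Ur)=\Ur$. This gives $\diam_{\TV}=\Ur$.

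\textbf{Comparison with $\Uk$.} The inequality $\Ur\le\Uk$ follows from $B_u\le\exp(\tau)$, so $C_R\le M\exp(\tau)$, together with the monotonicity of $C\mapsto C/(\ZA+C)$.

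\textbf{KL scaling.} For the lower bound, restrict to the same two endpoints. $\KL(\bp^{(0)}\|\bq)=-\log(1-s)$ depends only on the estimator's tail mass $s$. The conditional tail KL for $\bp^{(\Ur)}$ is minimized, and equal to zero, when $\bq$ spreads its tail proportionally to $B_u$. This reduces the problem exactly to the binary calculation in the proof of \Cref{thm:minimax} with $\Uk$ replaced by $\Ur$, giving $R_{\mathrm{bin}}(\Ur)=\Ur/e+O(\Ur^2)$.

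For the upper bound, use the estimator $q_v=(1-s^*)\alpha_v$ and $q_u=s^*B_u/C_R$. For a feasible $\bp$ with tail mass $t$, decompose
\[
\KL(\bp\|\bq)=\KL\bigl(\mathrm{Bern}(t)\,\|\,\mathrm{Bern}(s^*)\bigr)+t\,\KL(\pi\|\beta),
\]
where $\pi$ is $\bp$'s conditional tail and $\beta_u=B_u/C_R$. The needed ingredient is the per-token cap analogous to \eqref{eq:cap}. Since $p_u\le B_u/\ZA$ and $t\approx W/\ZA$, we have $\pi_u=w_u/W$. The main obstacle is that when $W\ll C_R$, $\pi$ can concentrate, so $\KL(\pi\|\beta)$ is not uniformly small. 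I would handle this as the $\Uk$ argument presumably does: $\pi_u/\beta_u=w_uC_R/(WB_u)\le C_R/W$, so $t\,\KL(\pi\|\beta)\le t\log(C_R/W)\approx t\log(\Ur/t)$. Maximizing $\KL(\mathrm{Bern}(t)\|\mathrm{Bern}(s^*))+t\log(\Ur/t)$ over $t\in[0,\Ur]$ at small $\Ur$ then yields $\Ur/e+O(\Ur^2)$ with $s^*=\Ur/e$. The point to check carefully is that the leading-order terms $-t\log s^*+t\log\Ur-t\log t+t$, evaluated at $s^*=\Ur/e$, give $t\log(\Ur/t)$ plus lower-order terms, whose maximum over $t$ is $\Ur/e$.
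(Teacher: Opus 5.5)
Your diameter argument, the comparison $\Ur\le\Uk$, and the binary lower bound are correct and follow the paper's proof. The paper's proof simply reruns the $\Uk$ analysis with caps $B_u$, extremal tail $p_u=\Ur B_u/C_R$, and reserve $\beta_u=B_u/C_R$. Your cap $\pi_u/\beta_u\le C_R/W=\Ur(1-t)/((1-\Ur)t)$ is exactly the paper's $\lambda(t)$ with $\Ur$ in place of $\Uk$, so the key ingredient of the upper bound is also right.

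The last step of the upper bound, however, is wrong as written. Your expression $-t\log s^*+t\log\Ur-t\log t+t$ does not reduce to $t\log(\Ur/t)$ at $s^*=\Ur/e$; it gives $2t-t\log t$. More importantly, $t\log(\Ur/t)$ is not the leading-order risk of your estimator. At $t=0$ the risk is $-\log(1-s^*)\approx\Ur/e$, not $0$, so the discrepancy is first order, not lower order. In fact $t\log(\Ur/t)=\min_s\,[\,t\log(\Ur/s)-t+s\,]$ is the leading-order risk when the reserve is tuned to the adversary's own $t$, which is a maximin quantity. Maximizing it over $t$ bounds $\sup_t$ of your fixed estimator's risk from below, not from above. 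That its maximum equals $\Ur/e$ reflects maximin $=$ minimax at leading order, but it does not prove the upper bound.

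The correct expansion is as follows. The $t\log t$ terms cancel, and $(1-t)\log\frac{1-t}{1-s}=s-t+O(\Ur^2)$, so $d(t\|s)+t\log(\Ur/t)=t\log(\Ur/s)-t+s+O(\Ur^2)$ uniformly on $[0,\Ur]$. At $s=\Ur/e$ this is identically $\Ur/e$ to first order, meaning every $t$ is asymptotically worst case.

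The paper's non-asymptotic route is cleaner. With $s=\Ur/e$, the envelope $G(t)=(1-t)\log\frac{1-t}{1-s}+t\log\frac{\Ur(1-t)}{(1-\Ur)s}$ equals $t+\log(1-t)-t\log(1-\Ur)-(1-t)\log(1-s)$. Using $t+\log(1-t)\le 0$ then gives $G(t)\le -\log(1-\Ur/e)-\Ur\log(1-\Ur)=\Ur/e+O(\Ur^2)$.

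Also keep the exact factor $(1-t)/(1-\Ur)\ge 1$ in $C_R/W$ instead of approximating $t\log(C_R/W)$ by $t\log(\Ur/t)$. Dropping that factor goes in the wrong direction for an upper bound, even though it costs only $-\Ur\log(1-\Ur)=O(\Ur^2)$.
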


The shrinkage $\Ur \leq \Uk$ quantifies the \emph{conditional} value of reference-model knowledge. Its validity depends on fine-tuning intensity; our experiments show it fails for aggressive SFT but is appropriate for mild adaptation.

%% file: sections/5_experiments.tex
\section{Experiments}
\label{sec:experiments}

We validate the theoretical bounds on real transformer next-token distributions and then separately measure the practical extraction consequence: distribution fidelity decouples from capability transfer under top-$K$ censoring.

\subsection{Setup}

\paragraph{Models.} We use Qwen3-0.6B (28 layers, $d{=}1024$, $V{=}151{,}936$) as the primary model family. The teacher is created by SFT on the GSM8K training set (3000 steps, lr=$2{\times}10^{-5}$), producing a math-capable model with GSM8K accuracy 44.6\% (base: 4.0\%, gap: $+$40.6 absolute points). The student is initialized from the same base model. For a second-family distribution-recovery stress test, we also use Llama-3.2-1B/3B teachers privately fine-tuned on WikiText; those runs report KL/top-1 agreement rather than PVR.

\paragraph{Metrics.} We report: (i) $\Uk$ (theoretical TV diameter); (ii) KL divergence to teacher; (iii) \emph{KL closure} $= 1 - \KL(T \| S) / \KL(T \| B)$ where $B$ is the base model trained on the same data distribution (WikiText) without teacher signal (\texttt{ce\_only}); (iv) \emph{Private Value Recovery} (PVR) $= (U_{\text{task}}(S) - U_{\text{task}}(B)) / (U_{\text{task}}(T) - U_{\text{task}}(B))$ on GSM8K exact-match accuracy. PVR measures what fraction of the teacher's private capability the student has acquired. Reported error bars are population standard deviations over seeds or prompt-position samples.

\paragraph{Methods.} We compare 9 extraction methods and controls across three access regimes:
\begin{itemize}
    \item \textbf{No-teacher and top-$K$ logit access}: \texttt{ce\_only} (no teacher signal), \texttt{strict\_topk\_kd} (sparse KL on top-$K$), \texttt{bild} (delta-teacher MSE), \texttt{delta\_rank} (delta + ranking + censored constraints).
    \item \textbf{Access-mode controls}: \texttt{dkd\_full} on WikiText prompts, plus \texttt{ontask\_topk} and \texttt{ontask\_dkd} on GSM8K prompts.
    \item \textbf{Generation access}: \texttt{gen\_sft} (SFT on teacher-generated chain-of-thought) and \texttt{joint\_gen\_kd} (generation SFT + logit KD).
\end{itemize}
All experiments use 3 random seeds; we report means.

\subsection{$\Uk$ on Real Models}

\Cref{tab:uk} reports $\Uk$ statistics computed over 3{,}200 prompt-position pairs from the math teacher. Even at $K{=}100$, the mean $\Uk$ remains $0.809$, indicating that ${\sim}81\%$ of the total-variation diameter is unresolved. At realistic API settings ($K{=}20$), $\Uk = 0.908$---the vast majority of the teacher's distribution is ambiguous.
For calibrated normalized top-$K$ log-probability APIs (\Cref{cor:normalized}), the relevant diameter is exactly the observed hidden tail mass, which averages $0.176$ at $K{=}20$; the unnormalized-access $\Uk$ is therefore a conservative harder-case bound.

\begin{table}[t]
\centering
\caption{$\Uk$ statistics on Qwen3-0.6B math teacher ($V{=}151{,}936$, 3{,}200 prompts). $R_{\mathrm{bin}}$ is the binary endpoint KL lower bound (\Cref{rem:exact}), a lower bound on the full minimax $R_K^*$. Even at $K{=}100$, over 80\% of the TV diameter remains unresolved.}
\label{tab:uk}
\small
\begin{tabular}{@{}r cc l@{}}
\toprule
$K$ & $\Uk$ (mean$\pm$std) & $R_{\mathrm{bin}}$ & \\
\midrule
5   & $0.984 \pm 0.088$ & $0.652$ & \\
10  & $0.970 \pm 0.124$ & $0.626$ & \\
20  & $0.908 \pm 0.237$ & $0.538$ & typical API setting \\
50  & $0.858 \pm 0.288$ & $0.484$ & \\
100 & $0.809 \pm 0.317$ & $0.437$ & \\
\bottomrule
\end{tabular}
\end{table}

\subsection{$K$-Sweep: Extraction Error Tracks $\Uk$}

\begin{figure}[t]
\centering
\includegraphics[width=0.48\textwidth]{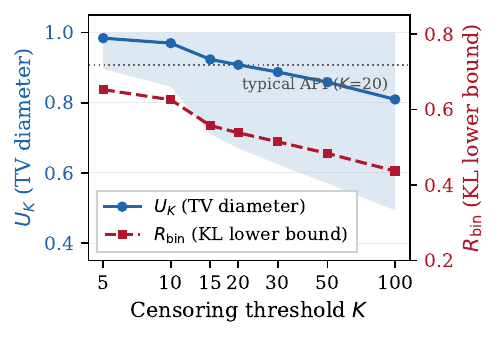}
\caption{TV diameter $\Uk$ and binary endpoint KL lower bound $R_{\mathrm{bin}}$ as a function of $K$ on Qwen3-0.6B math teacher (shading: $\pm 1$ population s.d.\ over 3{,}200 prompts). Even at $K{=}100$, $\Uk > 0.80$ and $R_{\mathrm{bin}} > 0.43$: the vast majority of the distribution remains unresolved.}
\label{fig:uk-sweep}
\end{figure}

We train \texttt{strict\_topk\_kd} and \texttt{delta\_rank} at each $K \in \{1, 5, 10, 20, 50, 100\}$ (3000 steps, 3 seeds). The empirical extraction error tracks the theoretical prediction: \texttt{strict\_topk\_kd}'s KL decreases monotonically from 3.15 ($K{=}1$) to 0.69 ($K{=}100$), paralleling the $\Uk$ curve. This is a training-averaged diagnostic: each SGD step observes many prompt-position pairs, while $\Uk$ bounds the ambiguity of each pair separately. \Cref{prop:nonadaptive-composition} gives the corresponding non-adaptive product-risk decomposition; formal adaptive composition remains open. \texttt{delta\_rank} achieves substantially lower KL than \texttt{strict\_topk\_kd} at $K{\geq}10$ but plateaus around 0.36--0.40, exhibiting a floor consistent with the logit KD ceiling. At $K{<}10$, \texttt{delta\_rank} is outside its intended operating range because the ranking/censoring constraints have too few candidate tokens; we report it only for $K{\geq}10$. The slight increase in \texttt{delta\_rank} KL from $K{=}10$ to $K{=}100$ (0.359$\to$0.397) is a diagnostic hypothesis: with more observed tokens, the ranking loss has more pairs to match, which can slightly conflict with the distributional objective. (All K-sweep results are regenerable from \texttt{results/k\_sweep/k\_sweep\_seed\{0,1,2\}.json}.)

\paragraph{Composition scope.}
The per-position identified set is the natural unit of partial identification. For independently drawn prompt positions and no cross-position structural assumptions, multiple observations produce a Cartesian product of per-position feasible sets rather than identifying the hidden tail of any individual position. Structure-exploiting adaptive attacks may do better: low-rank logit structure, autoregressive query-and-append access, or generation access can couple observations across positions and tighten or bypass the per-position barrier. Our experiments therefore test whether the average per-position ambiguity is predictive of prompt-only distillation error, not a formal adaptive-composition minimax theorem.

\subsection{Distribution Fidelity vs.\ Capability Transfer}

\Cref{tab:main} presents the central empirical result. On the GSM8K prompt distribution, top-$K$ KD recovers only 11.8\% PVR, full-logit DKD recovers 56.5\%, and generation SFT recovers 96.1\%. Off-task top-$K$ results additionally show that distributional fidelity substantially overstates private capability recovery: the best-KL \texttt{delta\_rank} variant achieves 60.0\% KL closure but only 7.2\% PVR, while sparse \texttt{strict\_topk\_kd} reaches 27.8\% PVR despite negative KL closure.

\begin{table}[t]
\centering
\caption{Layered extraction hierarchy on Qwen3-0.6B math teacher (GSM8K SFT, base 4.0\%$\to$teacher 44.6\%). Mean $\pm$ population s.d.\ over 3 seeds.}
\label{tab:main}
\small
\begin{tabular}{@{}l c l cc cc@{}}
\toprule
\textbf{Method} & \textbf{Access} & \textbf{Train data} & \textbf{KL}$\downarrow$ & \textbf{KL cl.} & \textbf{GSM8K} & \textbf{PVR} \\
\midrule
Base & --- & --- & --- & --- & 4.0\% & --- \\
Teacher & --- & --- & 0 & 100\% & 44.6\% & 100\% \\
\midrule
\texttt{ce\_only} & none & WikiText & $1.037_{\pm.002}$ & 0\% & $7.4_{\pm0.3}$ & $8.4_{\pm0.8}$ \\
\texttt{strict\_topk\_kd} & top-$K$ & WikiText & $1.183_{\pm.001}$ & $-$14\% & $15.3_{\pm1.7}$ & $27.8_{\pm4.1}$ \\
\texttt{bild} & top-$K$ & WikiText & $0.460_{\pm.002}$ & 55.6\% & $7.0_{\pm0.7}$ & $7.4_{\pm1.8}$ \\
\texttt{delta\_rank} & top-$K$ & WikiText & $\mathbf{0.414}_{\pm.002}$ & \textbf{60.0\%} & $6.9_{\pm1.2}$ & $7.2_{\pm2.9}$ \\
\midrule
\texttt{dkd\_full} & full logit & WikiText & $0.032_{\pm.000}$ & 96.9\% & $10.2_{\pm0.4}$ & $15.3_{\pm1.1}$ \\
\texttt{ontask\_topk} & top-$K$ & GSM8K$^\ddagger$ & $0.016_{\pm.000}$ & 98.4\% & $8.8_{\pm1.4}$ & $11.8_{\pm3.4}$ \\
\texttt{ontask\_dkd} & full logit & GSM8K$^\ddagger$ & $0.010_{\pm.000}$ & 99.0\% & $26.9_{\pm0.8}$ & $56.5_{\pm2.0}$ \\
\midrule
\texttt{gen\_sft} & generation & GSM8K & 0.48$^\dagger$ & --- & $43.0_{\pm1.0}$ & $\mathbf{96.1}_{\pm2.5}$ \\
\texttt{joint\_gen\_kd} & gen+logit & GSM8K+Wiki & 0.39$^\dagger$ & --- & $\mathbf{45.7}_{\pm1.1}$ & $\mathbf{102.6}_{\pm2.6}$ \\
\bottomrule
\end{tabular}
\end{table}

\noindent{\footnotesize $^\dagger$KL for generation-trained models is measured post-hoc on the same held-out evaluation distribution (WikiText validation), not during training.}

\noindent{\footnotesize $^\ddagger$On-task controls train on GSM8K prompts (same distribution as \texttt{gen\_sft}) without teacher-generated trajectories.}

\paragraph{Training distribution and access mode both matter.}
Full-logit KD on WikiText recovers only 15.3\% PVR; moving the same full-logit objective onto GSM8K prompts raises PVR to 56.5\%, so training distribution explains about 41 points. Holding the GSM8K prompt distribution fixed, replacing full logits with top-$K$ KD reduces PVR from 56.5\% to 11.8\%, isolating a 44.7-point access-mode drop for prompt-only KD. A further 39.6-point gap remains from on-task full-logit KD to generation SFT (96.1\%), which observes teacher trajectories rather than only prompt-position logits. Off-task results are method-dependent: \texttt{delta\_rank} yields 7.2\% PVR with strong KL closure, while \texttt{strict\_topk\_kd} reaches 27.8\% PVR while worsening KL. Top-$K$ censoring should therefore be read as a distribution-recovery barrier, not as a monotone PVR defense.
The on-task top-$K$ KD objective is a normalized observed-top-$K$ subdistribution objective, matching the normalized-access corollary rather than raw-logit scale matching. Its low PVR therefore shows that even the smaller normalized-access ambiguity does not by itself recover reasoning trajectories. The on-task KL closure values are measured on the same held-out WikiText distribution used by the CE denominator; they are sanity checks for distributional drift and should not be compared as strongly as the same-distribution PVR decomposition.

\paragraph{KL closure is not capability recovery.}
\texttt{strict\_topk\_kd} has negative KL closure ($-$14\%) yet 27.8\% PVR, while \texttt{dkd\_full} has 96.9\% closure but 15.3\% PVR. The decoupling is asymmetric: high fidelity does not imply high capability transfer, and low fidelity can still induce task gains through distributional bias.

A diagnostic explains why sparse top-$K$ KD can help despite worse KL: at $K{=}20$ in the K-sweep, \texttt{strict\_topk\_kd} has higher teacher top-1 agreement than \texttt{delta\_rank} (81.0\% vs.\ 75.9\%), so it better preserves the teacher's most likely token while distorting the long tail. PVR can therefore improve through top-token bias even when full-distribution KL worsens.

\paragraph{Generation and sequential logit access bypass prompt-only limits.}
\texttt{gen\_sft} recovers 96.1\% of math capability from 2{,}000 teacher-generated chain-of-thought answers, and \texttt{joint\_gen\_kd} reaches 102.6\%, statistically indistinguishable from full teacher recovery at the observed 2.6-point seed standard deviation. Sequential logit access (query, sample, append, repeat) is functionally equivalent to generation because it rolls out teacher trajectories token-by-token; our limitation is for \emph{prompt-only} single-pass logit KD.

\subsection{Supplementary Domain Check: The Decoupling is Task-Dependent}

We repeat extraction on a Qwen3-0.6B code teacher fine-tuned on MBPP (base 2.0\%$\to$teacher 29.0\% execution-pass). Code shows positive prompt-only transfer but a smaller access-mode separation: top-$K$ and full-logit objectives recover 46.3\% and 44.4\% PVR, respectively, while generation reaches teacher-level execution. The MBPP protocol gives each method one public assertion as a specification hint but excludes that assertion from scoring, so execution-pass is measured on held-out assertions. Results are mean $\pm$ population s.d.\ over 3 seeds; protocol details are in \Cref{app:proofs}.

\begin{table}[t]
\centering
\caption{Supplementary task-domain check (mean $\pm$ population s.d., 3 seeds). Math reasoning shows severe decoupling; code generation (execution-pass metric) shows positive prompt-only transfer but a smaller access-mode separation. PVR = fraction of teacher's private capability recovered.}
\label{tab:multidomain}
\small
\begin{tabular}{@{}l cc cc@{}}
\toprule
& \multicolumn{2}{c}{Math (GSM8K)} & \multicolumn{2}{c}{Code (MBPP exec)} \\
\cmidrule(lr){2-3} \cmidrule(lr){4-5}
\textbf{Method} & KL & PVR & KL & PVR \\
\midrule
\texttt{delta\_rank} / top-$K$ KL & $0.414$ & $7.2_{\pm2.9}$ & $0.019_{\pm.003}$ & $46.3_{\pm7.9}$ \\
\texttt{dkd\_full} (full) & $0.032$ & $15.3_{\pm1.1}$ & $0.019_{\pm.006}$ & $44.4_{\pm5.2}$ \\
\texttt{gen\_sft} & --- & $96.1_{\pm2.5}$ & --- & $100.0_{\pm4.0}$ \\
\bottomrule
\end{tabular}
\end{table}
For MBPP, the top-$K$ row is a normalized-log-probability API control: it renormalizes the observed top-$K$ distribution and compares that access regime to full logits, rather than reusing the math-domain residual ranking loss.

Thus $\Uk$ constrains distributional recovery, but capability transfer depends on how the private skill is encoded: math reasoning needs trajectories, while code generation shows more prompt-only transfer under this controlled MBPP protocol.

\subsection{Cross-Family $\Uk$ and Distribution Recovery}

The $\Uk$ geometry generalizes across architectures: on Llama-3.2-1B ($V{=}128{,}256$), $\Uk = 0.942$ at $K{=}20$ (vs.\ 0.908 for Qwen3), and both stay above 0.80 at $K{=}100$. WikiText extraction on private-finetuned Llama-3.2-1B/3B teachers shows the same fidelity/top-token tradeoff: \texttt{delta\_rank} lowers KL from CE-only $0.647{\to}0.296$ and $0.564{\to}0.336$, while sparse top-$K$ KD gives higher top-1 agreement but worse KL. A same-tokenizer Qwen3-1.7B$\to$0.6B stress test (\Cref{tab:qwen-cross-scale}) shows that top-$K$ methods no longer beat CE-only under capacity mismatch, while full-logit DKD still lowers KL ($0.559{\to}0.512$). Thus \Cref{tab:main}'s same-architecture PVR results should be read as an upper-bound-style extraction setting; cross-family and cross-scale PVR remain open.

$\Uk$ is a worst-case diameter, not typical tail mass: at $K{=}20$, Qwen3 has $\Uk=0.908$ while hidden tail mass averages $0.176$. A known base model can shrink the set (\Cref{thm:reference}): $\Ur=0.492$ for $\rho{=}1$, $0.573$ for $\rho{=}2$, and $0.752$ for $\rho{=}5$. Oracle calibration shows this prior is violated by aggressive SFT (median max perturbation 4.25; $\rho{=}5$ fails at 21\% of positions), so unrestricted $\Uk$ remains the relevant bound here.

\subsection{Interpretation}

At $K{=}20$, the certified binary lower bound is $R_{\mathrm{bin}}=0.538$ nats per token; for tolerance $\delta=0.1$, this lower bound already requires $\Uk\approx0.25$, far below observed values at $K\leq100$. For providers, top-$K$ censoring strongly limits prompt-only logit extraction of reasoning skills and partially limits code extraction, but generation/sequential access remains the dominant channel.

%% file: sections/6_conclusion.tex
\section{Conclusion}
\label{sec:conclusion}

We gave an exact identified-set characterization and certified KL recovery bounds for per-position distribution recovery under top-$K$ censored API access. The TV diameter is exactly $\Uk$; KL recovery admits a computable binary lower bound $R_{\mathrm{bin}}(\Uk)$ and an asymptotically matching $\Uk/e + O(\Uk^2)$ upper bound for small ambiguity. Thus the exact tight result is for TV diameter, while finite practical-$\Uk$ KL minimax recovery remains open. Empirically, access mode and teacher trajectories separate sharply: on GSM8K, full-logit DKD adds 45 PVR points over top-$K$ KD, and generation adds another 40. Generation is therefore a boundary condition on the threat model, not evidence that prompt-only top-$K$ geometry is irrelevant.

\paragraph{Limitations.}
The bounds are per-position: adaptive composition, autoregressive logit access, or generation can tighten or bypass the identified set. PVR is validated mainly within one model family, and API rounding, quantization, or temperature perturbation would change the geometry.

%% file: sections/A_appendix.tex
\section{Full Proofs}
\label{app:proofs}

\subsection{Proof of \Cref{thm:tv-diameter}}

\begin{proof}
Let $M = V - K$, $b = \exp(\tau)$, $\ZA = \sum_{v \in A} \exp(z_v)$, and $\alpha_v = \exp(z_v)/\ZA$.

For censored tokens $u \notin A$, write unnormalized weights $0 \leq y_u \leq b$. Let $Y = \sum_{u \notin A} y_u$ and $t = Y/(Z_A + Y)$. Then $t \in [0, \Uk]$ where $\Uk = Mb/(Z_A + Mb)$.

Every feasible distribution has $p_v = (1-t)\alpha_v$ for $v \in A$. For any two feasible distributions $\bp, \bq$ with tail masses $t, s$:
\begin{align*}
\TV(\bp, \bq) &= \frac{1}{2}\bigl(|t-s| + \|\bp_{\bar{A}} - \bq_{\bar{A}}\|_1\bigr) \\
&\leq \frac{1}{2}(|t-s| + t + s) = \max(t,s) \leq \Uk.
\end{align*}
Equality: take $\bp^{(0)} = (\alpha, \mathbf{0})$ and $\bp^{(\Uk)} = ((1{-}\Uk)\alpha, \frac{\Uk}{M}\mathbf{1})$. Then $\TV(\bp^{(0)}, \bp^{(\Uk)}) = \Uk$.
\end{proof}

\subsection{Proof of \Cref{thm:minimax}: Binary Endpoint Lower Bound}

\begin{proof}
Restrict $\bar{\cS}_K$ to $\{\bp^{(0)}, \bp^{(\Uk)}\}$. For any candidate $\bq$ with tail mass $s$:
\begin{align*}
\KL(\bp^{(0)} \| \bq) &= -\log(1-s), \\
\KL(\bp^{(\Uk)} \| \bq) &= (1{-}\Uk)\log\frac{1{-}\Uk}{1{-}s} + \Uk\log\frac{\Uk}{s}.
\end{align*}
Setting $\KL(\bp^{(0)} \| \bq) = \KL(\bp^{(\Uk)} \| \bq)$, we obtain
$\frac{s}{1-s} = \Uk(1{-}\Uk)^{(1-\Uk)/\Uk},$
giving $s^* = \Uk(1{-}\Uk)^{(1-\Uk)/\Uk}/(1 + \Uk(1{-}\Uk)^{(1-\Uk)/\Uk})$.

Expanding: let $h(\Uk) = 1 + \frac{1-\Uk}{\Uk}\log(1{-}\Uk) = \frac{\Uk}{2} + O(\Uk^2)$, so $s^* = \frac{\Uk}{e}e^{h(\Uk)}/(1 + \frac{\Uk}{e}e^{h(\Uk)}) = \Uk/e + O(\Uk^2)$.

The balanced risk is $R_{\mathrm{bin}} = -\log(1 - s^*) = \Uk/e + O(\Uk^2)$.
\end{proof}

\subsection{Proof of \Cref{thm:minimax}: Small-$\Uk$ Upper Bound}

\begin{proof}
Set $s = \Uk/e$ and define $q_v = (1-s)\alpha_v$ for $v \in A$, $q_u = s/M$ for $u \notin A$.

For any $\bp \in \bar{\cS}_K$ with tail mass $t$ and tail conditional $\mathbf{r}$ (where $r_u = p_u/t$):
\begin{equation*}
\KL(\bp \| \bq) = d(t \| s) + t \cdot \KL(\mathbf{r} \| \mathbf{u}_M),
\end{equation*}
where $d(t \| s) = t\log(t/s) + (1{-}t)\log((1{-}t)/(1{-}s))$ and $\mathbf{u}_M$ is uniform over $M$ tokens.

The per-token cap gives $r_u \leq \lambda(t)/M$ where $\lambda(t) = \Uk(1{-}t)/((1{-}\Uk)t)$. Therefore $\KL(\mathbf{r} \| \mathbf{u}_M) \leq \log \lambda(t)$.

Define $G_{\Uk}(t) = (1{-}t)\log\frac{1{-}t}{1{-}s} + t\log\frac{\Uk(1{-}t)}{(1{-}\Uk)s}$. Using $s = \Uk/e$ and $t + \log(1{-}t) \leq 0$:
\begin{equation*}
G_{\Uk}(t) \leq -(1{-}t)\log(1{-}s) - t\log(1{-}\Uk) \leq \Uk/e + O(\Uk^2)
\end{equation*}
uniformly for $t \in [0, \Uk]$ as $\Uk \to 0$. The binary lower bound has the same first-order expansion, giving asymptotic tightness in the small-ambiguity regime.
\end{proof}

\subsection{Proof of \Cref{thm:reference}}

\begin{proof}
The reference-aware identified set replaces the uniform cap $y_u \leq b$ with $y_u \leq B_u = \exp(\min(\tau, z_{\mathrm{ref}}(u) + \rho))$. The maximum tail mass becomes $t_{\max} = C_R/(Z_A + C_R)$ where $C_R = \sum_u B_u \leq Mb$, so $\Ur \leq \Uk$.

The extremal pair achieving $\TV = \Ur$ is $\bp^{(0)} = (\alpha, \mathbf{0})$ and $\bp^{(\Ur)}$ with tail $p_u = \Ur \cdot B_u/C_R$. The minimax analysis proceeds identically with $\Ur$ replacing $\Uk$ and optimal reserve $\beta_u = B_u/C_R$.
\end{proof}

\subsection{Non-Adaptive Product Composition}

\begin{proposition}[Non-adaptive product composition]
\label{prop:nonadaptive-composition}
Fix $m$ prompt-position queries before observing their top-$K$ outputs. Let $\bar{\cS}_{K,i}$ be the identified set induced by observation $i$, and assume no structural coupling across the $m$ conditional distributions beyond membership in the Cartesian product $\prod_{i=1}^m \bar{\cS}_{K,i}$. For estimators $\bq_{1:m}$ evaluated by average KL loss,
\begin{equation*}
\bar{R}_m^*
= \inf_{\bq_{1:m}} \sup_{\bp_i \in \bar{\cS}_{K,i}}
\frac{1}{m}\sum_{i=1}^m \KL(\bp_i \| \bq_i)
= \frac{1}{m}\sum_{i=1}^m R_{K,i}^* .
\end{equation*}
Consequently, $\bar{R}_m^* \geq m^{-1}\sum_i R_{\mathrm{bin}}(\Uk^{(i)})$, where $\Uk^{(i)}$ is the TV diameter for query $i$. If $\max_i \Uk^{(i)} \to 0$, coordinatewise symmetric estimators achieve
$m^{-1}\sum_i \Uk^{(i)}/e + O(m^{-1}\sum_i (\Uk^{(i)})^2)$.
\end{proposition}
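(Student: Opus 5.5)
The plan is to show that, once the query set is fixed non-adaptively and the feasible set is a Cartesian product, the min–max problem decouples coordinatewise. The objective $L(\bq_{1:m},\bp_{1:m})=\frac1m\sum_i \KL(\bp_i\|\bq_i)$ is a sum of terms, and the $i$-th term depends only on the pair $(\bp_i,\bq_i)$.

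\textbf{Step 1: the supremum splits.} Fix $\bq_{1:m}$. Because the constraint set is $\prod_i\bar{\cS}_{K,i}$, the adversary picks each $\bp_i$ independently. Hence
\[
\sup_{\bp_{1:m}} L \;=\; \frac1m\sum_i \sup_{\bp_i\in\bar{\cS}_{K,i}}\KL(\bp_i\|\bq_i).
\]
For the direction ``$\le$'', bound each term by its own supremum. For ``$\ge$'', take $\epsilon$-maximizers in each coordinate and let $\epsilon\to 0$. If some coordinate supremum is $+\infty$, both sides are $+\infty$. This is where the absence of cross-position coupling is used: with a coupled feasible set, only the ``$\le$'' inequality would survive.

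\textbf{Step 2: the infimum splits.} The estimator class is the product of per-query estimator spaces. This is the point where non-adaptivity enters: each $\bq_i$ may depend on observation $i$, and with all queries fixed in advance, on all the observations, but the product structure is unchanged. The right-hand side of Step 1 is a sum of functions $g_i(\bq_i)=\sup_{\bp_i}\KL(\bp_i\|\bq_i)$, each depending on a separate variable. So
\[
\inf_{\bq_{1:m}}\frac1m\sum_i g_i(\bq_i)=\frac1m\sum_i\inf_{\bq_i}g_i(\bq_i)=\frac1m\sum_i R^*_{K,i}.
\]
Again, ``$\ge$'' is immediate, and ``$\le$'' follows by choosing $\epsilon$-optimal $\bq_i$ coordinatewise. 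Each $R^*_{K,i}$ is finite: the symmetric estimator from the proof of \Cref{thm:minimax} gives finite worst-case KL, since the tail-mass range $[0,\Uk^{(i)}]$ is compact and $\bq$ has full support. So no $\infty-\infty$ issue arises.

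\textbf{Step 3: the two consequences.} The lower bound follows by applying the binary endpoint bound of \Cref{thm:minimax} (\Cref{rem:exact}), $R^*_{K,i}\ge R_{\mathrm{bin}}(\Uk^{(i)})$, term by term. For the upper bound, use in coordinate $i$ the symmetric estimator with reserve $s_i=\Uk^{(i)}/e$. The upper-bound proof of \Cref{thm:minimax} gives worst-case risk $\Uk^{(i)}/e+O((\Uk^{(i)})^2)$; averaging over $i$ then gives the claimed expression.

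\textbf{Main obstacle.} The delicate point is uniformity of the $O(\cdot)$ term across coordinates. The remainder must be bounded by $C(\Uk^{(i)})^2$ with one constant $C$ valid for all $\Uk^{(i)}\le\max_j\Uk^{(j)}$, rather than a per-coordinate asymptotic statement. I would revisit the bound $G_{\Uk}(t)\le-(1-t)\log(1-s)-t\log(1-\Uk)$ and bound it explicitly. For $\Uk\le 1/2$, use $-\log(1-x)\le x+x^2$ to obtain $G_{\Uk}(t)\le s+\Uk t+O(\Uk^2)\le \Uk/e+2\Uk^2$ for all $t\le\Uk$. This makes the constant explicit and independent of $i$, which justifies summing the remainders.
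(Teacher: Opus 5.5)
Your proposal is correct and follows the same route as the paper's proof. Separability of the average KL loss over the Cartesian product splits first the supremum and then the infimum coordinatewise, after which the binary-endpoint lower bound and the symmetric estimator are applied per query and averaged. Beyond what the paper gives, you supply the $\epsilon$-arguments and an explicit uniform remainder: $-\log(1-x)\le x+x^2$ yields worst-case risk at most $\Uk/e+2\Uk^2$ for $\Uk\le 1/2$, which justifies summing the per-coordinate $O((\Uk^{(i)})^2)$ terms that the paper averages without comment.
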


\begin{proof}
After the $m$ non-adaptive observations are fixed, the estimator chooses one candidate distribution $\bq_i$ for each observed prompt-position pair. For any fixed $\bq_{1:m}$, the adversary's feasible set is a Cartesian product and the loss is separable, so
\begin{equation*}
\sup_{\bp_i \in \bar{\cS}_{K,i}}
\frac{1}{m}\sum_{i=1}^m \KL(\bp_i \| \bq_i)
= \frac{1}{m}\sum_{i=1}^m
\sup_{\bp_i \in \bar{\cS}_{K,i}} \KL(\bp_i \| \bq_i).
\end{equation*}
Taking the infimum over $\bq_{1:m}$ also separates across coordinates:
\begin{equation*}
\inf_{\bq_{1:m}} \frac{1}{m}\sum_{i=1}^m
\sup_{\bp_i \in \bar{\cS}_{K,i}} \KL(\bp_i \| \bq_i)
= \frac{1}{m}\sum_{i=1}^m
\inf_{\bq_i}\sup_{\bp_i \in \bar{\cS}_{K,i}} \KL(\bp_i \| \bq_i)
= \frac{1}{m}\sum_{i=1}^m R_{K,i}^* .
\end{equation*}
The lower bound follows by applying the binary-endpoint lower bound to each coordinate. The small-$\Uk^{(i)}$ upper bound follows by applying the coordinatewise symmetric estimator from the proof of \Cref{thm:minimax} to each observation and averaging the resulting risks. This proposition does not cover adaptive query selection, autoregressive query-and-append access, or parametric coupling induced by a shared LLM weight vector.
\end{proof}

\section{Explicit $O(\Uk^2)$ Bounds for the Binary Lower Bound}

For $0 < \Uk \leq 1/2$, the binary endpoint reserve satisfies:
\begin{equation*}
\left|s^* - \Uk/e\right| \leq \Uk^2.
\end{equation*}
The binary endpoint KL lower bound expands as:
\begin{equation*}
R_{\mathrm{bin}}(\Uk) = \frac{\Uk}{e} + \left(\frac{1}{2e} - \frac{1}{2e^2}\right)\Uk^2 + O(\Uk^3).
\end{equation*}

\paragraph{Numerical verification of the binary $O(\Uk^2)$ correction.}
\Cref{tab:gap} shows that $R_{\mathrm{bin}}(\Uk) - \Uk/e$ is well-described by the second-order expansion $(1/(2e) - 1/(2e^2))\Uk^2 \approx 0.116\,\Uk^2$ only for small $\Uk$. For the $\Uk$ values observed in practice ($\Uk > 0.8$), the correction is a substantial fraction of the first-order term (e.g., 62\% at $\Uk = 0.91$), confirming that the asymptotic $\Uk/e$ approximation degrades at large $\Uk$ and the exact $R_{\mathrm{bin}}$ formula should be used. We also report the maximum of the proof's finite-$\Uk$ upper envelope $G_{\Uk}$; its gap from $R_{\mathrm{bin}}$ is large at observed $\Uk$, so the paper does not claim exact finite-$\Uk$ minimax rates.

\begin{table}[h]
\centering
\caption{Second-order correction and finite-$\Uk$ conservatism. $G_{\max}$ is the maximum of the proof's finite-$\Uk$ upper envelope with reserve $s=\Uk/e$; its large gap at observed $\Uk$ motivates reporting $R_{\mathrm{bin}}$ only as a certified lower bound.}
\label{tab:gap}
\small
\begin{tabular}{@{}r cccc@{}}
\toprule
$\Uk$ & $R_{\mathrm{bin}}(\Uk)$ & $\Uk/e$ & $R_{\mathrm{bin}}{-}\Uk/e$ & $G_{\max}$ \\
\midrule
0.10 & 0.038 & 0.037 & 0.001 & 0.040 \\
0.30 & 0.123 & 0.110 & 0.012 & 0.142 \\
0.50 & 0.223 & 0.184 & 0.039 & 0.294 \\
0.70 & 0.349 & 0.258 & 0.092 & 0.559 \\
0.81 & 0.437 & 0.298 & 0.139 & 0.825 \\
0.91 & 0.541 & 0.335 & 0.206 & 1.309 \\
0.98 & 0.644 & 0.361 & 0.284 & 2.416 \\
\bottomrule
\end{tabular}
\end{table}

\section{Additional Experimental Details}

\paragraph{Math teacher training.} Qwen3-0.6B base model fine-tuned on GSM8K training split (7,473 examples) for 3,000 steps with AdamW (lr=$2{\times}10^{-5}$, weight decay=0.01, cosine schedule), batch size 4, max sequence length 512. Training format: ``Question: [Q]\textbackslash nAnswer: [A]'' where [A] includes the chain-of-thought solution with ``\#\#\#\# [answer]'' format.

\paragraph{GSM8K evaluation.} Greedy decoding (temperature=1, do\_sample=False), max 256 new tokens. Answer extracted via regex matching ``\#\#\#\# [number]'' or last number in generation. 500 test samples per evaluation.

\paragraph{On-task logit controls.} The on-task controls train on GSM8K prompts without teacher-generated trajectories. Full-logit DKD gives GSM8K/PVR/KL values: seed 42 = 27.4\% / 57.6\% / 0.0103, seed 123 = 27.4\% / 57.6\% / 0.0102, and seed 777 = 26.0\% / 54.2\% / 0.0102. On-task top-$K$ KD gives: seed 42 = 9.4\% / 13.3\% / 0.0167, seed 123 = 9.8\% / 14.3\% / 0.0162, and seed 777 = 7.2\% / 7.9\% / 0.0166.

\paragraph{Llama distribution-recovery stress test.} We fine-tune Llama-3.2-1B and Llama-3.2-3B on the WikiText-103 test split for 500 steps and train students on WikiText-103 train with $K{=}20$ top-logit access. These runs evaluate distribution recovery (KL and teacher top-1 agreement), not private-value recovery.

\begin{table}[h]
\centering
\caption{Cross-family distribution recovery on private-finetuned Llama teachers (mean $\pm$ population s.d.\ over 3 seeds). KL closure is measured relative to the CE-only student.}
\label{tab:llama-dist}
\small
\begin{tabular}{@{}l l c c c@{}}
\toprule
\textbf{Model} & \textbf{Method} & \textbf{KL}$\downarrow$ & \textbf{KL cl.} & \textbf{Top-1 agree}$\uparrow$ \\
\midrule
Llama-3.2-1B & \texttt{ce\_only} & $0.647_{\pm.004}$ & 0\% & $66.2_{\pm0.1}$ \\
 & \texttt{strict\_topk\_kd} & $1.065_{\pm.000}$ & $-$65\% & $80.4_{\pm0.1}$ \\
 & \texttt{delta\_rank} & $\mathbf{0.296}_{\pm.011}$ & \textbf{54\%} & $75.1_{\pm0.3}$ \\
 & \texttt{full\_logit} & $0.148_{\pm.000}$ & 77\% & $78.1_{\pm0.1}$ \\
\midrule
Llama-3.2-3B & \texttt{ce\_only} & $0.564_{\pm.002}$ & 0\% & $68.9_{\pm0.2}$ \\
 & \texttt{strict\_topk\_kd} & $1.036_{\pm.002}$ & $-$84\% & $77.1_{\pm0.0}$ \\
 & \texttt{delta\_rank} & $\mathbf{0.336}_{\pm.010}$ & \textbf{40\%} & $74.3_{\pm0.4}$ \\
 & \texttt{full\_logit} & $0.225_{\pm.001}$ & 60\% & $76.3_{\pm0.0}$ \\
\bottomrule
\end{tabular}
\end{table}

\paragraph{Cross-scale Qwen3 distribution-recovery stress test.}
We additionally ran a minimal same-tokenizer cross-scale check with a Qwen3-1.7B WikiText-private teacher and a Qwen3-0.6B student. This is a capacity-mismatch stress test, not a private-value evaluation: all methods train for 2000 steps on WikiText with $K{=}20$, and we report held-out distribution recovery over 3 seeds. The result in \Cref{tab:qwen-cross-scale} is deliberately conservative. Top-$K$ residual methods do not improve on the CE-only student, while full-logit DKD gives a small but stable KL reduction. We therefore do not use this run to claim cross-scale top-$K$ recovery; it supports the narrower point that full logits remain more informative than censored top-$K$ access under capacity mismatch.

\begin{table}[h]
\centering
\caption{Minimal cross-scale Qwen3 distribution-recovery stress test: Qwen3-1.7B teacher $\to$ Qwen3-0.6B student, WikiText, $K{=}20$, 2000 steps, mean $\pm$ population s.d.\ over 3 seeds. KL closure is measured relative to the CE-only student.}
\label{tab:qwen-cross-scale}
\small
\begin{tabular}{@{}l c c c@{}}
\toprule
\textbf{Method} & \textbf{KL}$\downarrow$ & \textbf{KL cl.} & \textbf{Top-1 agree}$\uparrow$ \\
\midrule
\texttt{ce\_only} & $0.559_{\pm.001}$ & 0\% & $67.92_{\pm0.03}$ \\
\texttt{strict\_topk\_kd} & $1.454_{\pm.001}$ & $-$160.0\% & $67.98_{\pm0.07}$ \\
\texttt{delta\_rank} & $0.597_{\pm.003}$ & $-$6.7\% & $67.54_{\pm0.05}$ \\
\texttt{dkd\_full} & $\mathbf{0.512}_{\pm.000}$ & \textbf{8.4\%} & $67.97_{\pm0.08}$ \\
\bottomrule
\end{tabular}
\end{table}

This stress test also clarifies the scope of \Cref{tab:main}: same-architecture, same-scale extraction is an informative controlled setting and likely an upper bound on practical prompt-only extraction efficacy. Cross-scale and cross-architecture private-value recovery remain important open evaluations.

\paragraph{Code teacher training.} Qwen3-0.6B base model fine-tuned on MBPP training split (374 examples) for 5 epochs with AdamW (lr=$2{\times}10^{-5}$, weight decay=0.01, cosine schedule), batch size 4, max sequence length 512, bfloat16 precision.

\paragraph{MBPP evaluation.} Execution-based: greedy decoding (max 256 new tokens), generated code is executed against 3 held-out assertions per problem using a 1-second timeout per assertion. A problem is correct only if all scored assertions pass. We evaluate 200 test samples and 3 random seeds. The supplemental script \texttt{scripts/run\_code\_domain\_v2.py} generates the execution-pass fields reported here; \texttt{scripts/run\_code\_domain.py} is an older name-match diagnostic. The prompt includes \texttt{test\_list[0]} as a public specification hint, following the MBPP convention of providing task-level examples~\citep{austin2021program}; scoring excludes this prompted assertion and uses held-out assertions from \texttt{test\_list[1:4]}. The reported MBPP run uses $K{=}20$, 800 student steps, 25 held-out WikiText texts for post-hoc KL, all 374 MBPP training examples for generation SFT, and batch size 8 for execution evaluation.

\begin{figure}[h]
\centering
\includegraphics[width=0.85\textwidth]{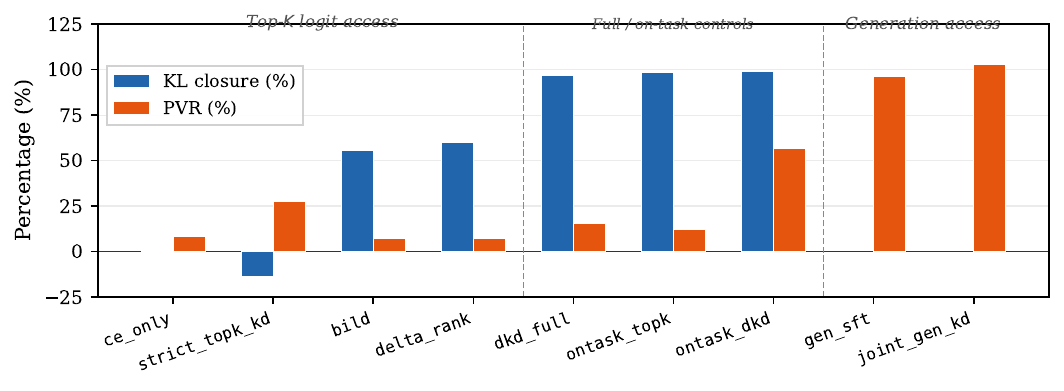}
\caption{KL closure vs.\ PVR across extraction methods, including on-task logit controls.}
\label{fig:kl-vs-pvr}
\end{figure}

%% file: references.bib
@inproceedings{carlini2024stealing,
  title={Stealing Part of a Production Language Model},
  author={Carlini, Nicholas and Paleka, Daniel and Dvijotham, Krishnamurthy Dj and Steinke, Thomas and Hayase, Jonathan and Cooper, A. Feder and Lee, Katherine and Jagielski, Matthew and Nasr, Milad and Conmy, Arthur and others},
  booktitle={International Conference on Machine Learning (ICML)},
  year={2024}
}

@article{golowich2025provably,
  title={Provably Learning from Modern Language Models via Low Logit Rank},
  author={Golowich, Noah and Liu, Allen and Shetty, Abhishek},
  journal={arXiv preprint arXiv:2512.09892},
  year={2025}
}

@article{golowich2025sequences,
  title={Sequences of Logits Reveal the Low Rank Structure of Language Models},
  author={Golowich, Noah and Liu, Allen and Shetty, Abhishek},
  journal={arXiv preprint arXiv:2510.24966},
  year={2025},
  eprint={2510.24966},
  archiveprefix={arXiv},
  doi={10.48550/arXiv.2510.24966}
}

@book{manski2003partial,
  title={Partial Identification of Probability Distributions},
  author={Manski, Charles F.},
  year={2003},
  publisher={Springer}
}

@incollection{molinari2020microeconometrics,
  title={Microeconometrics with Partial Identification},
  author={Molinari, Francesca},
  booktitle={Handbook of Econometrics},
  volume={7},
  pages={355--486},
  year={2020},
  publisher={Elsevier},
  doi={10.1016/bs.hoe.2020.05.002}
}

@inproceedings{tramer2016stealing,
  title={Stealing Machine Learning Models via Prediction APIs},
  author={Tram{\`e}r, Florian and Zhang, Fan and Juels, Ari and Reiter, Michael K. and Ristenpart, Thomas},
  booktitle={USENIX Security Symposium},
  year={2016}
}

@inproceedings{jagielski2020high,
  title={High Accuracy and High Fidelity Extraction of Neural Networks},
  author={Jagielski, Matthew and Carlini, Nicholas and Berthelot, David and Kurakin, Alex and Papernot, Nicolas},
  booktitle={USENIX Security Symposium},
  year={2020}
}

@article{finlayson2024logits,
  title={Logits of API-Protected LLMs Leak Proprietary Information},
  author={Finlayson, Matthew and Ren, Xiang and Swayamdipta, Swabha},
  journal={arXiv preprint arXiv:2403.09539},
  year={2024}
}

@inproceedings{agarwal2024gkd,
  title={On-Policy Distillation of Language Models: Learning from Self-Generated Mistakes},
  author={Agarwal, Rishabh and Vieillard, Nino and Zhou, Yongchao and Stanczyk, Piotr and Ramos Garea, Sabela and Geist, Matthieu and Bachem, Olivier},
  booktitle={International Conference on Learning Representations (ICLR)},
  year={2024}
}

@inproceedings{gu2024minillm,
  title={MiniLLM: On-Policy Distillation of Large Language Models},
  author={Gu, Yuxian and Dong, Li and Wei, Furu and Huang, Minlie},
  booktitle={International Conference on Learning Representations (ICLR)},
  year={2024},
  eprint={2306.08543},
  archiveprefix={arXiv}
}

@article{hinton2015distilling,
  title={Distilling the Knowledge in a Neural Network},
  author={Hinton, Geoffrey and Vinyals, Oriol and Dean, Jeff},
  journal={arXiv preprint arXiv:1503.02531},
  year={2015}
}

@article{deepseek2025r1,
  title={DeepSeek-R1 incentivizes reasoning in LLMs through reinforcement learning},
  author={Guo, Daya and Yang, Dejian and Zhang, Haowei and others},
  journal={Nature},
  volume={645},
  pages={633--638},
  year={2025},
  doi={10.1038/s41586-025-09422-z}
}

@inproceedings{hewitt2022truncation,
  title={Truncation Sampling as Language Model Desmoothing},
  author={Hewitt, John and Manning, Christopher D. and Liang, Percy},
  booktitle={Findings of EMNLP},
  year={2022}
}

@article{austin2021program,
  title={Program Synthesis with Large Language Models},
  author={Austin, Jacob and Odena, Augustus and Nye, Maxwell and Bosma, Maarten and Michalewski, Henryk and Dohan, David and Jiang, Ellen and Cai, Carrie and Terry, Michael and Le, Quoc and Sutton, Charles},
  journal={arXiv preprint arXiv:2108.07732},
  year={2021},
  eprint={2108.07732},
  archiveprefix={arXiv}
}

@article{topk_attention_tv2025,
  title={A Mathematical Theory of Top-k Sparse Attention via Total Variation Distance},
  author={Tzachristas, Georgios and Deng, Lei and Tzachristas, Ioannis and Zhang, Gong and Chen, Renhai},
  journal={arXiv preprint arXiv:2512.07647},
  year={2025}
}
